\DeclareMathOperator*{\argmin}{arg\,min}
\DeclareMathOperator{\Sup}{\textup{Sup}}
\title{An Approach to Colour Morphological Supremum Formation using the LogSumExp Approximation}
\titlerunning{An Approach to Colour Morphology with the LogSumExp Approximation}
\author{Marvin Kahra\inst{1}, Michael Breu\ss\inst{1}, Andreas Kleefeld\inst{2,3}, and Martin Welk\inst{4}}
\authorrunning{M. Kahra et al.}
\institute{Institute for Mathematics, Brandenburg University of Technology Cottbus-Senftenberg, 03046 Cottbus, Germany \\
\email{\{marvin.kahra,breuss\}@b-tu.de} \and
Forschungszentrum Jülich GmbH, Jülich Supercomputing Centre, \\
Wilhelm-Johnen-Stra\ss{}e, 52425 Jülich, Germany \\
\email{a.kleefeld@fz-juelich.de} 
\and
University of Applied Sciences Aachen, Faculty of Medical Engineering and Technomathematics, \\
Heinrich-Mu\ss{}mann-Stra\ss{}e 1, 52428 Jülich, Germany \\
\and
UMIT TIROL – Private University for Health Sciences and Health
Technology, Eduard-Wallnöfer-Zentrum 1, 6060 Hall/Tyrol, Austria \\
\email{martin.welk@umit-tirol.at}
}
\begin{document}

\maketitle

\begin{abstract}
    Mathematical morphology is a part of image processing that has proven to be fruitful for numerous applications. Two main operations in mathematical morphology are dilation and erosion. These are based on the construction of a supremum or infimum with respect to an order over the tonal range in a certain section of the image. The tonal ordering can easily be realised in grey-scale morphology, and some morphological methods have been proposed for colour morphology. However, all of these have certain limitations.

    In this paper we present a novel approach to colour morphology extending upon previous work in the field based on the Loewner order. We propose to consider an approximation of the supremum by means of a log-sum exponentiation introduced by Maslov. We apply this to the embedding of an RGB image in a field of symmetric $2\times2$ matrices. In this way we obtain nearly isotropic matrices representing colours and the structural advantage of transitivity. In numerical experiments we highlight some remarkable properties of the proposed approach.

    \keywords{mathematical morphology \and colour image \and matrix-valued image \and symmetric matrix \and transitivity}
\end{abstract}

\section{Introduction}

Mathematical morphology is a theory used to analyse spatial structures in images. Over the decades it has developed into a very successful field of image processing, see e.g. \cite{Najman-Talbot,Roerdink-2011,Serra-Soille} for an overview. Morphological operators basically consist of two main components. The first of these is the structuring element (SE), which is characterised by its shape, size and position. These in turn can be divided into two types of SEs, flat and non-flat, cf. \cite{Haralick_1}. A flat SE basically defines a neighbourhood of the central pixel where appropriate morphological operations are performed, while a non-flat SE also contains a mask with finite values that are used as additive offsets. The SE is usually implemented as a window sliding over the image. The second main component is used to perform a comparison of values within an SE. Two basic operations in mathematical morphology are dilation and erosion, where a pixel value is set to the maximum and minimum, respectively, of the discrete image function within the SE. Many morphological filtering procedures of practical interest, such as opening, closing or top hats, can be formulated by combining dilation and erosion operations. Since dilation and erosion are dual operations, it is often sufficient to restrict oneself to one of the two when constructing algorithms.

Let us briefly discuss how the operations of dilation and erosion can be applied to colour morphology, as it is the underlying concept for our further considerations. As already mentioned, the most important operation in morphology is to perform a comparison of the tonal values, or in our case the colour values, within the SE. %over certain sets of pixels in an image domain. 
For the simpler application areas such as grey value morphology, one can act directly on complete lattices in order to obtain a total order of the colour values, cf. \cite{Blusseau,Serra-Soille}. In the case of colour morphology, this is no longer the case, as there is no total order of the colour values. For this reason, corresponding half-orders and different basic structures are used, cf. \cite{BAR76}. The first approach that could be used for this would be to regard each colour channel of an image as an independent image and to perform grey value morphology on each of them. The other approach, which is more popular, uses a vector space structure where each colour is considered as a vector in an underlying colour space. This can take the form of \cite{Angulo}, \cite{APT08} or \cite{LEZ07}. We will take the latter approach, but use symmetric matrices instead of vectors. We will order the elements by means of a half-order, namely the Loewner order (see \cite{Loewner}).

To calculate the basic operations of colour morphology, the supremum or infimum is determined instead of the maximum or minimum. There are also different approaches how to choose the supremum of symmetric matrices. To give some examples, we mention here the nuclear norm, the Frobenius norm and the spectral norm. For a comparison of these norms we refer to the work \cite{WelkQuantile} by Welk, Kleefeld and Breu\ss. 

In this paper, however, we want to consider a different approach, namely the approximation of the supremum by means of a log-sum exponentiation. The reason why we follow this approach is that it is an approximation which already gave promising results in the work by Kahra, Sridhar and Breu{\ss} (see \cite{KSB}) for grey scale images in connection with a fast Fourier transform. This approach was also used for colour images in \cite{Srid}, but only in the above described sense of a channel-wise structuring of colour images. Another connection worth mentioning is with the paper of Burgeth, Welk, Feddern and Weickert (see \cite{morph_op_mat_im}), where root and power were used for symmetric positive semidefinite matrices instead of logarithm and exponential function. However, our approach here has the advantage of being transitive.

This paper will be the first step for transferring the LogSumExp approach to colour morphology with tonal vectors/matrices. The goal is to present a first introduction to the characterisation of this approach for tonal value matrices and a comparison with some of the other already existing approaches. %For this, we will use numerical experiments to highlight some properties of the approach.

\section{General Definitions}

\subsection{Colour Morphology}

In order to make this paper self-contained, we briefly recall some basics of mathematical morphology in general and colour morphology in particular.

For this purpose, we first consider a two-dimensional, discrete image domain $\Omega \subseteq \mathbb{Z}^2$ and a single-channel grey-scale image, which is explained by a function $f: \Omega \rightarrow [0,255]$. In the case of non-flat morphology, the SE can be represented as a function $b: \mathbb{Z}^2 \rightarrow \mathbb{R} \cup \{ - \infty\}$ with 
\begin{align*}
    b(\boldsymbol x) := 
    \begin{cases}
        \beta(\boldsymbol x), \quad & \boldsymbol x \in B_0, \\
		- \infty, & \text{otherwise},
    \end{cases}
    \quad B_0 \subset \mathbb{Z}^2,
\end{align*}
where $B_0$ is a set centred at the origin. In the case of a flat filter (flat morphology), it is simply the special case $\beta(\boldsymbol x) = 0$. The most elementary operations of mathematical morphology are dilation 
\begin{align*}
    (f\oplus b)(\boldsymbol x) := \max_{\boldsymbol u\in \mathbb{Z}^2}  {\{f(\boldsymbol x- \boldsymbol u)\: +\: b(\boldsymbol u) \}}
\end{align*}
and erosion
\begin{align*}
    (f\ominus b)(\boldsymbol x) := \min_{\boldsymbol u\in \mathbb{Z}^2}  {\{f(\boldsymbol x + \boldsymbol u)\: -\: b(\boldsymbol u) \}}.
\end{align*}
With these two operations, many other operations can be defined that are of great interest in practice, for example opening $f \circ b = (f \ominus b) \oplus b$ and closing $f \bullet b = (f \oplus b) \ominus b$.

We now come to our real area of interest, namely colour morphology. This is similar to the greyscale morphology already shown, with the difference that we no longer have just one channel, but three. There are many useful formats for expressing this, see \cite{SHA03}. A classic approach in this sense is the channel-by-channel processing of an image with the RGB colour model, see e.g. \cite{Srid} for a recent example of channel-wise scheme implementation. However, instead of RGB vectors, we will use symmetric, positive semi-definite $2 \times 2$ matrices. 
For this we assume that the colour values are already normalised to the interval $[0,1]$. First, we transfer this vector into the HCL colour space by means of $M = \max\{r,g,b\}$, $m = \min\{r,g,b\}$, $c = M - m$, $\ell = \frac{1}{2} (M + m)$ and 
\begin{align*}
    h = 
    \begin{cases}
        \frac{g - b}{6c} \mod 1, \quad & M = r, \\
        \frac{b - r}{6c} + \frac{1}{3} \mod 1, \quad & M = g, \\
        \frac{r - g}{6c} + \frac{2}{3} \mod 1, \quad & M = b.
    \end{cases}
\end{align*}
Then we replace the luminance $\ell$ with $\tilde{\ell} = 2\ell -1$ and consider the quantities $c$, $2\pi h$ and $\tilde{\ell}$ as radial, angular and axial coordinates of a cylindrical coordinate system, respectively. This represents a bijection onto the bi-cone, which in turn is interpreted with Cartesian coordinates by $x = c \cdot \cos(2\pi h)$, $y = c \cdot \sin(2\pi h)$ and $z = \tilde{\ell}$. Finally, we map these Cartesian coordinates onto a symmetric matrix as follows:
\begin{align*}
    \boldsymbol A := \frac{\sqrt{2}}{2} 
    \begin{pmatrix}
        z - y & x \\
        x & z + y
    \end{pmatrix},
\end{align*}
where the complete transformation process is a bijective mapping, see \cite{Loewner}.

\subsection{The Loewner Order and the Decision of a Minimiser}

Since there is no classical ordering of the elements in $\mathbb{R}^3$ or $\mathbb{R}^{2 \times 2}$, we need to ask when an element is larger or smaller than another element. To answer this question, we need to use a weaker definition of an ordering relation, i.e. a half-order. For this reason, we resort to the promising Loewner order and the colour morphological processing based on this, which was already presented by Burgeth and Kleefeld (see \cite{BurgethKleefeld}). 

For two symmetric matrices $\boldsymbol A$ and $\boldsymbol B$, the Loewner order $\leq_L$ or $<_L$ is defined as 
\begin{align*}
    \boldsymbol A <_L \boldsymbol B ~ (\boldsymbol A \leq_L \boldsymbol B) \Longleftrightarrow \boldsymbol B- \boldsymbol A \text{ is positive (semi-)definite.}
\end{align*}
The problem with this semi-order is that it is not a lattice order \cite{BorweinLewis} and therefore it is not possible to find a unique matrix maximum or supremum. To get around this problem, one needs another property to select a uniquely determined matrix maximum from the convex set of symmetric matrices $\mathcal{U}(\mathcal{X})$ that are upper bounds in the Loewner sense for the multi-set $\mathcal{X} = \{ \boldsymbol X_1, \boldsymbol X_2, \dots, \boldsymbol X_n \}$ of given data matrices of the set $\text{Sym}(2)$ of symmetric real $2 \times 2$ matrices with
\begin{align*}
    \mathcal{U}(\mathcal{X}) := \left\{ \boldsymbol Y \in \text{Sym}(2) : \boldsymbol X \leq_L \boldsymbol Y \quad \forall \boldsymbol X \in \mathcal{X} \right\},
\end{align*}
see \cite{WelkQuantile}. For representation of this property, we use the function $\varphi : \mathcal{U}(\mathcal{X}) \rightarrow \mathbb{R}$, which is convex and Loewner-monotone, i.e.
\begin{align*}
    \varphi(\boldsymbol A) \leq \varphi(\boldsymbol B) \quad \Longleftrightarrow \quad \boldsymbol A \leq_L \boldsymbol B. 
\end{align*}
Furthermore, $\varphi$ should have a unique minimiser in $\mathcal{U}(\mathcal{X})$, then we can define the $\varphi$-supremum of $\mathcal{X}$ as said minimiser:
\begin{align*}
    \Sup_{\varphi}(\mathcal{X}) := \argmin_{\boldsymbol Y \in \mathcal{U}(\mathcal{X})} \varphi(\boldsymbol Y).
\end{align*}
The matrix supremum introduced in the paper \cite{morph_op_mat_im} is based on the calculation of the trace and is therefore also called trace-supremum. That is, one has $\varphi(Y) = \trace Y$ and we get as supremum:
\begin{align*}
    \Sup_{\trace}(\mathcal{X}) := \argmin_{\boldsymbol Y \in \mathcal{U}(\mathcal{X})} \trace \boldsymbol Y.
\end{align*}
Based on the corresponding norms, the Frobenius supremum 
\begin{align*}
    \Sup_{2}(\mathcal{X}) := \argmin_{\boldsymbol Y \in \mathcal{U}(\mathcal{X})} \sum_{\boldsymbol X \in \mathcal{X}} \norm{\boldsymbol Y - \boldsymbol X}_2
\end{align*}
and the spectral supremum 
\begin{align*}
    \Sup_{\infty}(\mathcal{X}) := \argmin_{\boldsymbol Y \in \mathcal{U}(\mathcal{X})} \sum_{\boldsymbol X \in \mathcal{X}} \abs{\lambda_1(\boldsymbol Y - \boldsymbol X)},
\end{align*}
where $\lambda_1(\boldsymbol A)$ denotes the largest eigenvalue of $\boldsymbol A$, were derived in \cite{WelkQuantile}. At this point it should be noted that, in the case of positive semidefinite matrices, all three norms are Schatten norms $\norm{\,\cdot\,}_p$ for $p \in \{ 1, 2, \infty \}$.

\section{Mathematical Background of the Log-Exp-Supremum}

In this section, we will give a brief introduction to the construction of a characterisation for the so-called log-exp-supremum and summarise some of its properties, but will refer to a future paper for the proofs. %The characterisation is based on the spectral decomposition of the matrices of section 2.1 and satisfies the transitivity.

Let the multi-set $\mathcal{X} = \{\boldsymbol X_1,\dots , \boldsymbol X_n\}$, $n \in \mathbb{N}$, of symmetric real $2 \times 2$ matrices represent our colour values in the considered neighbourhood. Then, we define the \textbf{log-exp-supremum (LES)} as 
\begin{align}
    \boldsymbol S := \Sup_{\text{LE}}(\mathcal{X}) := \lim_{m \rightarrow \infty} \left( \frac{1}{m} \log \sum_{i=1}^n \exp(m \boldsymbol X_i) \right).
    \label{LES}
\end{align}
For the calculation of the LES, we will use a characterisation of the form
\begin{align}
    \boldsymbol{S} = \lambda_1 \boldsymbol{u}_1 \boldsymbol{u}_1^\textup T + \mu \boldsymbol{v}_1 \boldsymbol{v}_1^\textup T,
    \label{LES_characterisation}
\end{align}
where $\lambda_1$ is the largest eigenvalue of the matrices of $\mathcal{X}$ with the corresponding normalised eigenvector $\boldsymbol{u}_1$, $\boldsymbol{v}_1$ is the normalised eigenvector perpendicular to $\boldsymbol{u}_1$ and $\mu$ is the next largest eigenvalue of $\mathcal{X}$ with an eigenvector not aligned with $\boldsymbol{u}_1$ if $\lambda_1$ is unique, otherwise $\mu = \lambda_1$ holds. At this point it should be noted that $\mu$ does not have to be the corresponding eigenvalue to the eigenvector $\boldsymbol{v}_1$. The characterisation is based on the spectral decomposition of the input $\boldsymbol{X}_i$:
\begin{align}
    \begin{split}
        &\boldsymbol X_i = \lambda_i \boldsymbol u_i \boldsymbol u_i^\textup{T} + \mu_i \boldsymbol v_i \boldsymbol v_i^\textup{T}, \quad \lambda_i \geq \mu_i, \quad \langle \boldsymbol u_i, \boldsymbol v_i \rangle = 0, \quad \abs{\boldsymbol u_i} = 1 = \abs{\boldsymbol v_i}, \\
        &\boldsymbol u_i = (c_i,s_i)^\textup{T} , \quad \boldsymbol v_i = (-s_i,c_i)^\textup{T}, \quad  c_i = \cos(\varphi_i), \quad s_i = \sin(\varphi_i),\\
        &\varphi_i \in \left[- \frac{\pi}{2}, \frac{\pi}{2}\right], \quad i = 1,\dots,n,
    \end{split}
    \label{spectral_decomposition}
\end{align}
where $\lambda_i,\mu_i \in \mathbb{R}$ are the eigenvalues of $\boldsymbol X_i$ and $\boldsymbol u_i \perp \boldsymbol v_i$ are the unit eigenvectors to the corresponding eigenvalues. Additionally, we used for the equation \eqref{LES_characterisation} that without loss of generality the matrix $\boldsymbol{X}_1$ has (one of) the largest eigenvalues of $\mathcal{X}$. The proof exploits the spectral decomposition \eqref{spectral_decomposition} and certain properties of the Rayleigh product and symmetric $2 \times 2$ matrices. However, we will not go into the proof of the LES characterisation \eqref{LES_characterisation} here, as it would strain the scope of this paper too much. Instead, we intend to do this in a separate paper and refer to the results of our experiments for now.

We now turn our attention to some of the properties of \eqref{LES}. First of all, we need to clarify whether it is actually a supremum in the Loewner sense. In fact, it is easy to show that it is an upper bound:

\begin{lemma}
    The LES $\boldsymbol S$ according to \eqref{LES} is an upper bound in the Loewner sense for the given matrices $\mathcal{X}$.
\end{lemma}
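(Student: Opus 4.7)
The plan is to show $\boldsymbol X_j \leq_L \boldsymbol S$ for every $j \in \{1,\dots,n\}$ by working at finite $m$ first and then passing to the limit. Fix $m > 0$ and $j$. Since each $\boldsymbol X_i$ is symmetric, $\exp(m \boldsymbol X_i)$ is symmetric positive definite for every $i$. Hence
\begin{align*}
    \sum_{i=1}^n \exp(m \boldsymbol X_i) \;-\; \exp(m \boldsymbol X_j) \;=\; \sum_{i \neq j} \exp(m \boldsymbol X_i)
\end{align*}
is a sum of positive semidefinite matrices, so it is itself positive semidefinite. This gives the Loewner inequality $\exp(m \boldsymbol X_j) \leq_L \sum_{i=1}^n \exp(m \boldsymbol X_i)$ for free.

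The second step is to invert the exponential. I would invoke the Löwner--Heinz theorem, which says that the matrix logarithm is operator monotone on the cone of positive definite matrices. Applying it to the inequality above yields
\begin{align*}
    m \boldsymbol X_j \;=\; \log \exp(m \boldsymbol X_j) \;\leq_L\; \log \sum_{i=1}^n \exp(m \boldsymbol X_i).
\end{align*}
Scaling by the positive factor $1/m$ preserves the Loewner order, so
\begin{align*}
    \boldsymbol X_j \;\leq_L\; \frac{1}{m} \log \sum_{i=1}^n \exp(m \boldsymbol X_i).
\end{align*}

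Finally, I would pass to the limit $m \to \infty$. The cone of positive semidefinite matrices is closed in $\text{Sym}(2)$, so the Loewner inequality is preserved under entry-wise limits. By the definition \eqref{LES} of $\boldsymbol S$ the right-hand side converges to $\boldsymbol S$, and therefore $\boldsymbol X_j \leq_L \boldsymbol S$. Since $j$ was arbitrary, $\boldsymbol S \in \mathcal{U}(\mathcal{X})$.

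The only genuinely non-elementary ingredient is the operator monotonicity of $\log$; the rest is bookkeeping with the convex cone of positive semidefinite matrices. If one wanted to avoid citing Löwner--Heinz, an alternative would be to test the inequality against an arbitrary unit vector $\boldsymbol w \in \mathbb{R}^2$, use the scalar bound $\langle \boldsymbol w, \exp(m \boldsymbol X_j) \boldsymbol w \rangle \leq \sum_i \langle \boldsymbol w, \exp(m \boldsymbol X_i) \boldsymbol w \rangle$, and then exploit $\frac{1}{m}\log\langle \boldsymbol w, \exp(m \boldsymbol X_i) \boldsymbol w \rangle \to \lambda_{\max}(\boldsymbol X_i, \boldsymbol w)$-type asymptotics; but the direct matrix-monotonicity route above is cleaner.
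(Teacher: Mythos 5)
Your proof is correct and follows essentially the same route as the paper's: bound $\exp(m\boldsymbol X_j)$ by the sum of positive definite exponentials, apply operator monotonicity of the logarithm, rescale, and pass to the limit. You are in fact slightly more explicit than the paper, which leaves the $1/m$ scaling and the limit step (closedness of the positive semidefinite cone) implicit.
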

\begin{proof}
    Apparently applies 
    \begin{align*}
        \sum_{i=1}^n \exp(m \boldsymbol X_i) \geq_L \exp(m \boldsymbol X_j) \quad \forall j \in \{ 1,\dots,n \},  
    \end{align*}
    which in combination with the fact that the logarithm is an operator-monotone function (see \cite{lowner_monotone}), i.e. $\boldsymbol A \leq_L \boldsymbol B \Longrightarrow \log \boldsymbol A \leq_L \log \boldsymbol B$ for symmetric positive definite matrices $\boldsymbol A,\boldsymbol B$, results in $\boldsymbol S \geq_L \boldsymbol X_j$ for all $j \in \{1,\dots,n\}$. \hfill $\square$
\end{proof}

We already mentioned in the previous section that it is necessary to have another property to select a uniquely determined maximum from the convex set $\mathcal{U}(\mathcal{X})$. This should be represented by a function $\varphi$, which should have a unique minimiser in $\mathcal{U}(\mathcal{X})$. Unfortunately, there is no such total ordering function $\varphi$ on $\mathcal{U}(\mathcal{X})$ for which $\boldsymbol{S}$ is the unique minimiser. However, by defining the \textbf{$\boldsymbol{p}$-power upper bound cone} as
\begin{align}
    \mathcal{U}_p(\mathcal{X}) := \left( \mathcal{U}(\mathcal{X}^p) \right)^\frac{1}{p} = \big\{ \boldsymbol{Y} \in \textup{Sym}(2) : \boldsymbol{Y}^p \in \mathcal{U}(\mathcal{X}^p) \big\}
    \label{p-power UB cone}
\end{align}
for the element-wise application of the $p$-th power to the multi-set $\mathcal{X}$ in form of $\mathcal{X}^p := \{ \boldsymbol{X}^p :  \boldsymbol{X} \in \mathcal{X} \}$, we can slightly modify the problem to obtain a unique minimiser. To be exact, we denote the intersection of all $p$-power upper bound cones as the \textbf{super-upper bound cone}
\begin{align}
    \mathcal{U}_*(\mathcal{X}) := \bigcap_{p > 0} \mathcal{U}_p(\mathcal{X})
    \label{super-UB cone}
\end{align}
and use this cone to find the minimiser instead of $\mathcal{U}(\mathcal{X})$. In fact, following the reasoning in \cite{morph_op_mat_im}, one sees that $\boldsymbol{S} \in \mathcal{U}_p(\mathcal{X})$ holds for any $p > 0$, and therefore $\boldsymbol{S} \in \mathcal{U}_*(\mathcal{X})$. By declaring the function $\boldsymbol{\varphi}: \textup{Sym}(2) \rightarrow \mathbb R^2$ with $\boldsymbol{\varphi}(\boldsymbol{A}) = (\lambda_1,\lambda_2)$, where $\lambda_1,\lambda_2$ are the eigenvalues of $\boldsymbol{A} \in \textup{Sym}(2)$, and endowing the $\mathbb R^2$ with the lexicographic order
\begin{align}
    (a,b) \prec (a',b') :\Longleftrightarrow (a < a' \vee (a = a' \wedge b \leq b'))
    \label{lexicographic order}
\end{align}
one can show that the function $\boldsymbol{\varphi}$ is Loewner-monotone, convex and $\boldsymbol{S}$ is its unique minimiser in $\mathcal{U}_*(\mathcal{X})$. 
In addition, one can show the following lemma for transitivity, which we will not do here, but refer to our experiments.
\begin{lemma}
    The LES \eqref{LES} is transitive, i.e. for multi-sets $\mathcal{X}$ and $\mathcal{Y}$ of symmetric $2 \times 2$ matrices one has 
    \begin{align}
        \Sup_{\textup{LE}}(\mathcal{X} \cup \mathcal{Y}) = \Sup_{\textup{LE}}\big(\{\Sup_{\textup{LE}}(\mathcal{X}), \Sup_{\textup{LE}}(\mathcal{Y})\}\big).
        \label{Transitivity}
    \end{align}
\end{lemma}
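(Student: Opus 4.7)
The plan is to establish the identity first at the level of the truncated (finite-$m$) log-sum-exp approximation and then pass to the limit $m\to\infty$. Set $\boldsymbol{S}_m(\mathcal{Z}):=\tfrac{1}{m}\log\sum_{\boldsymbol{Z}\in\mathcal{Z}}\exp(m\boldsymbol{Z})$, so that $\Sup_{\textup{LE}}(\mathcal{Z})=\lim_{m\to\infty}\boldsymbol{S}_m(\mathcal{Z})$.

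The first step is purely algebraic. Each $\exp(m\boldsymbol{X}_i)$ is symmetric positive definite, and the cone of such matrices is closed under addition, so $\sum_{\boldsymbol{X}\in\mathcal{X}}\exp(m\boldsymbol{X})$ is symmetric positive definite; on it, $\exp$ and $\log$ are mutual inverses. Hence $\exp\bigl(m\boldsymbol{S}_m(\mathcal{X})\bigr)=\sum_{\boldsymbol{X}\in\mathcal{X}}\exp(m\boldsymbol{X})$ and likewise for $\mathcal{Y}$. Adding these two identities and applying $\tfrac{1}{m}\log$ yields the exact pre-limit version of \eqref{Transitivity}:
$$
\boldsymbol{S}_m(\mathcal{X}\cup\mathcal{Y})\;=\;\boldsymbol{S}_m\bigl(\{\boldsymbol{S}_m(\mathcal{X}),\,\boldsymbol{S}_m(\mathcal{Y})\}\bigr)\qquad\text{for every }m>0.
$$

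Passing $m\to\infty$ on the left directly gives $\Sup_{\textup{LE}}(\mathcal{X}\cup\mathcal{Y})$, the left-hand side of \eqref{Transitivity}. What remains is to show that the right-hand side converges to $\Sup_{\textup{LE}}(\{\Sup_{\textup{LE}}(\mathcal{X}),\Sup_{\textup{LE}}(\mathcal{Y})\})$; this is an interchange-of-limits step, since the inner $\boldsymbol{S}_m(\mathcal{X})$ and $\boldsymbol{S}_m(\mathcal{Y})$ carry the same parameter $m$ as the outer log-sum-exp. I would handle it by showing that the map $(\boldsymbol{A},\boldsymbol{B})\mapsto\boldsymbol{S}_m(\{\boldsymbol{A},\boldsymbol{B}\})$ is Lipschitz in an appropriate matrix norm with a constant bounded uniformly in $m$ (the scalar analogue has partial derivatives in $[0,1]$, and operator-monotonicity of $\log$, already used in the previous lemma, carries enough of this structure over), so that perturbing the inner arguments from $\boldsymbol{S}_m(\mathcal{X}),\boldsymbol{S}_m(\mathcal{Y})$ to their limits contributes an error vanishing with $m$.

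The main obstacle is precisely this Lipschitz/interchange step; the algebraic identity itself is immediate. A more combinatorial alternative that avoids any limit interchange is to invoke the spectral characterisation \eqref{LES_characterisation} on both sides of \eqref{Transitivity}: each side is determined by the largest eigenvalue appearing among the underlying matrices together with the largest eigenvalue whose eigenvector is not aligned with the corresponding leading one. A short case analysis, distinguishing $\lambda_1(\mathcal{X})>\lambda_1(\mathcal{Y})$, $\lambda_1(\mathcal{X})<\lambda_1(\mathcal{Y})$, and the equality case (with subcases according to whether the leading eigenvectors align), then shows that the two sides carry the same pair $(\lambda_1,\mu)$ together with the same leading eigenvector, which by \eqref{LES_characterisation} means that they coincide as matrices.
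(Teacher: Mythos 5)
First, a point of comparison: the paper does not prove this lemma at all — it states it with the remark that the proof is deferred to a separate paper and offers only experimental evidence. So your attempt has to stand on its own, and it does not yet.

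Your pre-limit identity is correct and is a genuinely nice observation: since each $\sum_{\boldsymbol{X}\in\mathcal{X}}\exp(m\boldsymbol{X})$ is symmetric positive definite and matrix $\exp$ and $\log$ are mutually inverse on that cone, $\exp(m\boldsymbol{S}_m(\mathcal{X}))=\sum_{\boldsymbol{X}\in\mathcal{X}}\exp(m\boldsymbol{X})$ holds exactly, and the identity $\boldsymbol{S}_m(\mathcal{X}\cup\mathcal{Y})=\boldsymbol{S}_m(\{\boldsymbol{S}_m(\mathcal{X}),\boldsymbol{S}_m(\mathcal{Y})\})$ follows for every $m$. The gap is the interchange of limits, and it is fatal to the route you propose: the uniform-in-$m$ Lipschitz bound for $(\boldsymbol{A},\boldsymbol{B})\mapsto\boldsymbol{S}_m(\{\boldsymbol{A},\boldsymbol{B}\})$ is \emph{false}. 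If the family $\boldsymbol{S}_m(\{\cdot,\cdot\})$ were Lipschitz uniformly in $m$ (or even just uniformly equicontinuous), its pointwise limit $\Sup_{\textup{LE}}(\{\cdot,\cdot\})$ would be continuous. It is not: by the characterisation \eqref{LES_characterisation}, $\mu$ is the largest eigenvalue whose eigenvector is \emph{not aligned} with $\boldsymbol{u}_1$, a non-generic condition, so $\mu$ jumps as an eigenvector rotates into alignment. Concretely, for $\boldsymbol{A}=\mathrm{diag}(1,0)$ and $\boldsymbol{B}_\theta$ the rotation of $\mathrm{diag}(\tfrac12,0)$ by angle $\theta$, one gets $\Sup_{\textup{LE}}(\{\boldsymbol{A},\boldsymbol{B}_\theta\})=\mathrm{diag}(1,\tfrac12)$ for all $\theta\neq 0$ but $\mathrm{diag}(1,0)$ at $\theta=0$. (The authors themselves concede this discontinuity when they announce a future ``relaxation \dots{} so that it even depends continuously on the input data.'') Relatedly, operator monotonicity of $\log$ will not rescue you the way it did in the upper-bound lemma, because the perturbation of the inner argument enters as $\exp(m\boldsymbol{A}+m\boldsymbol{E}_m)$ with $m\|\boldsymbol{E}_m\|$ possibly unbounded, and the matrix exponential is not operator monotone, so the scalar sandwich $e^{-m\|\boldsymbol{E}_m\|}\exp(m\boldsymbol{A})\leq_L\exp(m\boldsymbol{A}_m)\leq_L e^{m\|\boldsymbol{E}_m\|}\exp(m\boldsymbol{A})$ is not available.

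Your second, ``combinatorial'' route via \eqref{LES_characterisation} is the one that can actually be made to work, and it is presumably what the authors intend, since the characterisation reduces $\Sup_{\textup{LE}}(\mathcal{Z})$ to the data $(\lambda_1,\boldsymbol{u}_1,\mu)$ extracted from the spectral decompositions; one then checks that $\mathcal{X}\cup\mathcal{Y}$ and $\{\Sup_{\textup{LE}}(\mathcal{X}),\Sup_{\textup{LE}}(\mathcal{Y})\}$ yield the same triple. But as written this is only a plan, and the subcases are where the content lies: in $\Sup_{\textup{LE}}(\mathcal{Y})$ the eigenvalue $\mu(\mathcal{Y})$ gets reattached to the eigenvector $\boldsymbol{u}_1(\mathcal{Y})^\perp$, which may differ from its original eigendirection in $\mathcal{Y}$, so you must verify that this reattachment never changes the maximiser of the ``not aligned with $\boldsymbol{u}_1(\mathcal{X}\cup\mathcal{Y})$'' candidate set; and the tie case $\lambda_1(\mathcal{X})=\lambda_1(\mathcal{Y})$ with aligned versus non-aligned leading eigenvectors, as well as isotropic inputs (multiples of the identity, for which every vector is an eigenvector), all need to be handled explicitly. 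Until that case analysis is carried out, the lemma is not proved.
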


\section{Experiments}

We divide this section into three parts with four experiments. First, we will show that our new method yields correct results in the Loewner sense, as did the previous Loewner methods. Then we will show how the approximations of different morphological operations affect it. Our main focus will be on the almost isotropic character and especially the transitivity compared to previous approaches. %Finally, we will demonstrate that our method is also applicable to non-flat morphology. \\
For this, we will apply some elementary flat morphological operations to synthetic and natural colour images of different sizes. For the experiments we are using various structuring elements: a $3 \times 3$ square, a $5 \times 5$ square and a $9 \times 9$ square% and a non-flat $7 \times 7$ ball with a centred slope and a maximum offset of 6.9982
, all centred on the middle pixel. 
%The latter SE was created using Matlab and the command \textit{offsetstrel('ball',2,7,4)}. 

\subsection{Correctness of the Loewner Order}

For this we consider a simple $30 \times 30$ image divided into blue (RGB = (0,0,1)) and green (RGB = (0,1,0)) and perform a dilation and an erosion on it with the $9 \times 9$ SE, see Figure \ref{bgimage}. To better distinguish the images, we have placed a black frame around them. The white border in between does not belong to the image. 

\begin{figure}
\centering
\minipage{0.2\linewidth}
 \fbox{\includegraphics[width=\linewidth]{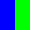}}
\endminipage
\hfill
\minipage{0.2\linewidth}
 \fbox{\includegraphics[width=\linewidth]{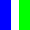}}
\endminipage
\hfill
\minipage{0.2\linewidth}
 \fbox{\includegraphics[width=\linewidth]{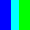}}
\endminipage
\hfill
\minipage{0.2\linewidth}
 \fbox{\includegraphics[width=\linewidth]{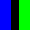}}
\endminipage
\caption{\label{bgimage} Comparison of different dilation and erosion methods of an $30 \times 30$ image with a $9 \times 9$ SE.
{\bf From left to right:} Original image, dilation with the new method, channel-wise dilation and erosion (new method or channel-wise).
}
\end{figure}

Since both colours are on opposite sides of the $x$-$y$-plane of the bi-cone with respect to the $y$-coordinate, the maximum and minimum should be white and black, respectively. The dilation with our approach correctly shows white and the erosion black, as was to be expected from earlier Loewner approaches, compare the work \cite{BurgethKleefeld} of Burgeth and Kleefeld, since all colour channels are considered simultaneously. In contrast, at least channel-wise dilation, i.e. separate dilation for each colour channel, does not produce the desired white according to the Loewner order, but a mixture of the blue and green tones. 

\subsection{Application of different Morphological Operations}

We will use the $64 \times 64$ pepper image and apply dilation, erosion, closing and opening with the $3 \times 3$ SE according to our approach, see Figure \ref{char_operations}. We immediately notice that most of the colours fade. As a result of log-sum exponentiation, we basically only compare the leading eigenvalues with different eigenvectors, and as $m$ increases, the leading terms become more dominant, which results in higher eigenvalues and thus brighter colours. This is especially true if the two largest eigenvalues in the neighbourhood under consideration are already close to each other, which often happens with smaller SEs and natural images. If the two largest eigenvalues $\lambda_1$ and $\lambda_2 = \lambda_1 + \varepsilon$ for a sufficiently small $\varepsilon > 0$ have different eigenvectors, we obtain with our usual notation
\begin{align*}
    \boldsymbol{S} = \lambda_2 \boldsymbol u_2 \boldsymbol u_2^\textup T + \lambda_1 \boldsymbol v_2 \boldsymbol v_2^\textup T = (\lambda_1 +  \varepsilon) \boldsymbol u_2 \boldsymbol u_2^\textup T + \lambda_1 \boldsymbol v_2 \boldsymbol v_2^\textup T = \lambda_1 \boldsymbol{I} + \varepsilon \boldsymbol u_2 \boldsymbol u_2^\textup T,
\end{align*}
where we set $\boldsymbol u_2 = (1,0)^\textup T$ and $\boldsymbol v_2 = (0,1)^\textup T$. For example, the exact dilation in Figure \ref{char_operations} has an average difference between the largest and second largest eigenvalue of 0.0268 per pixel, while the largest and smallest eigenvalues of the image are 0.5574 and $-0.7071$. To illustrate this further, we have included in Figure \ref{char_operations} an approximate dilation with a scaling factor of $m = 69$, chosen because this is the largest factor that Matlab can compute without error. The clear difference to the converged dilation is possibly due to two reasons. Firstly, the factor is far from the limit value, but this cannot be changed, at least with the current implementation in Matlab, for the reason mentioned above. Secondly, it may be due to the fact that if the eigenvector directions are close together, which is not unusual in natural images where there are no fast or strong colour transitions, the convergence against the converged matrix should be quite slow.

\begin{figure}
\centering
\includegraphics[width=0.29\linewidth]{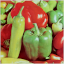} \hspace{0.6em}
\includegraphics[width=0.29\linewidth]{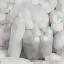} \hspace{0.6em}
\includegraphics[width=0.29\linewidth]{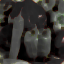}
\\[0.6em]
\includegraphics[width=0.29\linewidth]{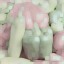} \hspace{0.6em}
\includegraphics[width=0.29\linewidth]{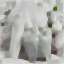} \hspace{0.6em}
\includegraphics[width=0.29\linewidth]{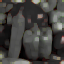}
\caption{\label{char_operations} Illustration of exact filtering results obtained with the new LES method for different operations of a $64 \times 64$ pepper image with a $3 \times 3$ SE. 
{\bf Top:} {\bf From left to right:} Original image, exact dilation and exact erosion.
{\bf Bottom:} {\bf From left to right:} Approximate dilation ($m = 69$), exact closing and exact opening.
}
\end{figure}

% \begin{figure}
%     \centering
%     \minipage{0.32\linewidth}
%         \includegraphics[width=\linewidth]{images/pepper64.png}
%     \endminipage
%     \hfill
%     \minipage{0.32\linewidth}
%         \includegraphics[width=\linewidth]{images/approx69_dilation_3_pepper64.png}
%     \endminipage
%     \hfill
%     \minipage{0.32\linewidth}
%         \includegraphics[width=\linewidth]{images/char_dilation_3_pepper64.png}
%     \endminipage
%     \caption{\label{dilation_process} Comparison of filtering results for the approximated and exact dilation with a $3 \times 3$ SE.
%     {\bf From left to right:} Original image, approximation with $m = 69$ and exact dilation.
%     }
% \end{figure}

\subsection{Transitivity}

For this experiment we again use the $64 \times 64$ pepper image and apply a dilation with the $5 \times 5$ mask to it and compare the result with applying a dilation with the $3 \times 3$ mask twice for our method and Burgeth's and Kleefeld's trace optimisation method, see Figure 3. By comparison we mean a channel-wise comparison of the two images in the form of a difference from which we take the absolute values. Our method should give the exact same image due to transitivity and Burgeth's and Kleefeld's method \cite{Loewner} should give a slightly different image.

\begin{figure}
\centering
\minipage{0.32\linewidth}
 \includegraphics[width=\linewidth]{images/pepper64.png}
\endminipage
\hfill
\minipage{0.32\linewidth}
 \includegraphics[width=\linewidth]{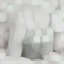}
\endminipage
\hfill
\minipage{0.32\linewidth}
 \includegraphics[width=\linewidth]{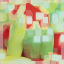}
\endminipage
\hfill
\\[0.6em]
\minipage{0.32\linewidth}
 \includegraphics[width=\linewidth]{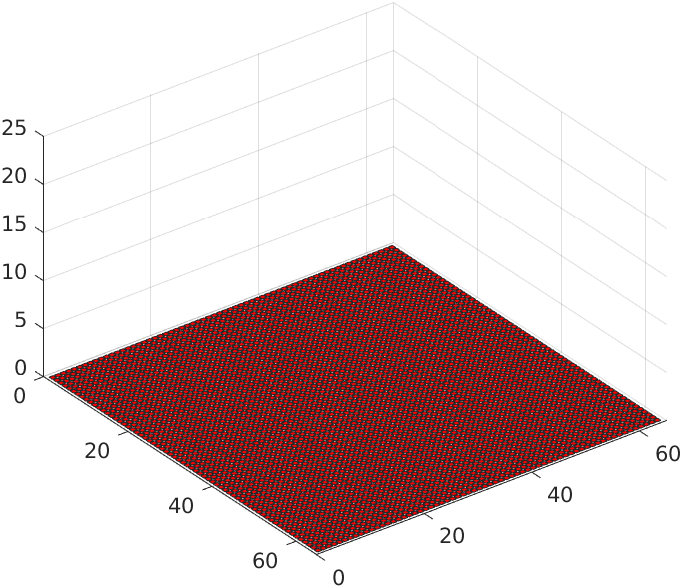}
\endminipage
\hfill
\minipage{0.32\linewidth}
 \includegraphics[width=\linewidth]{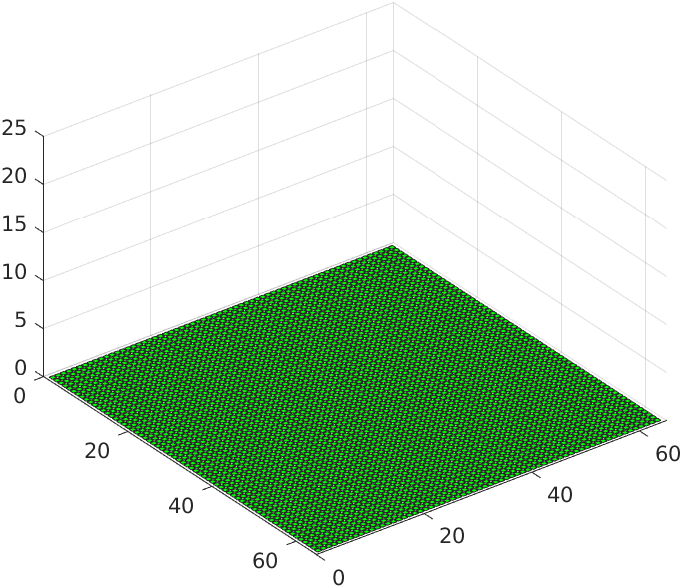}
\endminipage
\hfill
\minipage{0.32\linewidth}
 \includegraphics[width=\linewidth]{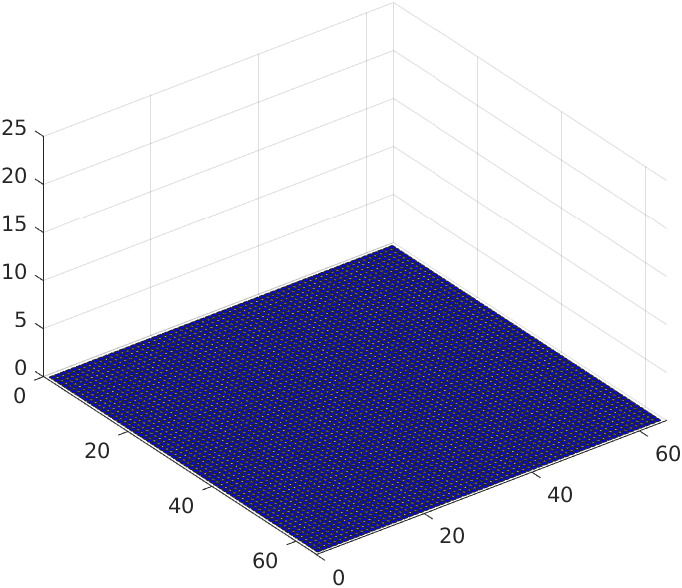}
\endminipage
\hfill
\\[0.6em]
\minipage{0.32\linewidth}
 \includegraphics[width=\linewidth]{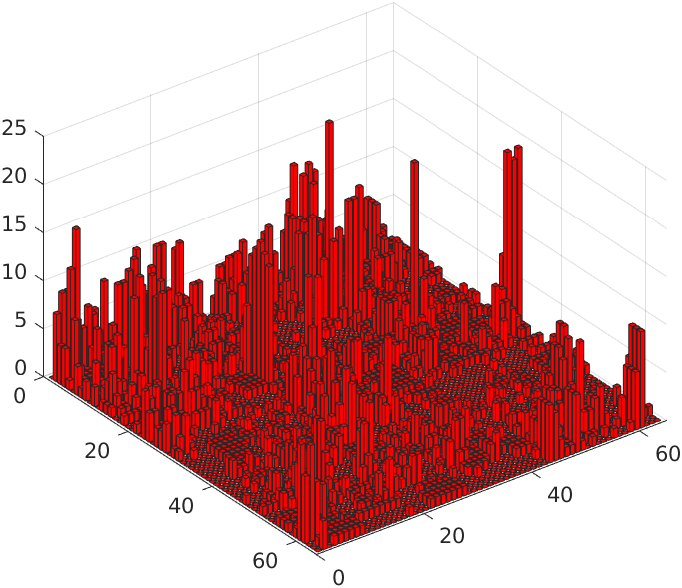}
\endminipage
\hfill
\minipage{0.32\linewidth}
 \includegraphics[width=\linewidth]{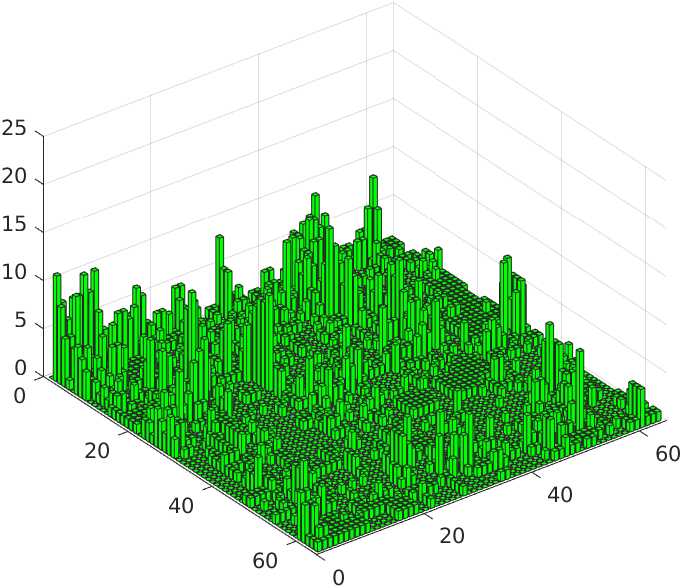}
\endminipage
\hfill
\minipage{0.32\linewidth}
 \includegraphics[width=\linewidth]{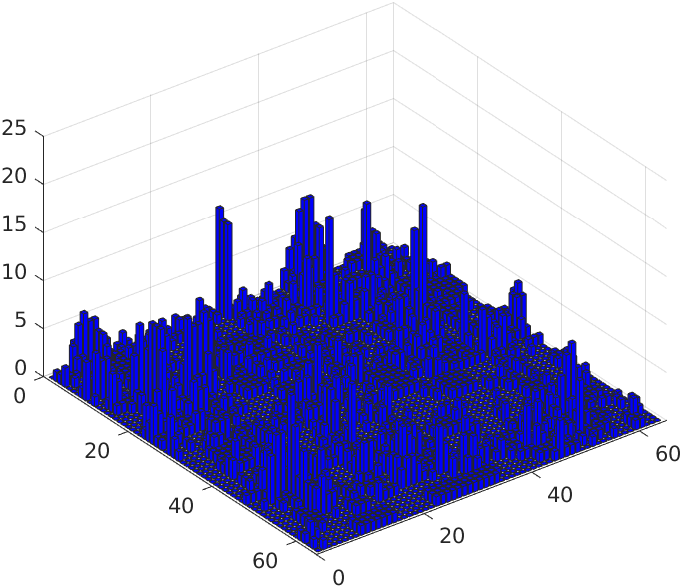}    
\endminipage
\caption{\label{transitivity_5-3} Comparison of differences in the channel-wise deviations between the exact filtering results of the dilation with a $5 \times 5$ SE and applying a dilation with the $3 \times 3$ SE twice using our new method and the method of Burgeth and Kleefeld. 
{\bf Top:} {\bf From left to right:} Original image, dilation with the $5 \times 5$ SE by using our method and Burgeth's and Kleefeld's method.
{\bf Mid:} {\bf From left to right:} Differences in our method with absolute values in the channels red, green and blue.
{\bf Bottom:} {\bf From left to right:} Differences in Burgeth's and Kleefeld's method with absolute values in the channels red, green and blue.
}
\end{figure}

In fact, with our method we do not see any differences in the colour channels. In contrast, Burgeth's and Kleefeld's method shows clear deviations, which occur across the entire image in all channels. In particular, the red colour channel shows strong deviations of up to around 20 tonal values. 

To examine this difference more closely, we look at how the error behaves when we double the number of dilations. This means we compare a dilation with a $9 \times 9$ SE with a fourfold dilation with a $3 \times 3$ SE, see Figure \ref{transitivity_9-3}. Because of the doubling of operations, we would expect the error for the corresponding methods to double as well.

\begin{figure}
\centering
\minipage{0.32\linewidth}
 \includegraphics[width=\linewidth]{images/pepper64.png}
\endminipage
\hfill
\minipage{0.32\linewidth}
 \includegraphics[width=\linewidth]{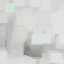}
\endminipage
\hfill
\minipage{0.32\linewidth}
 \includegraphics[width=\linewidth]{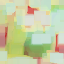}
\endminipage
\hfill
\\[0.6em]
\minipage{0.32\linewidth}
 \includegraphics[width=\linewidth]{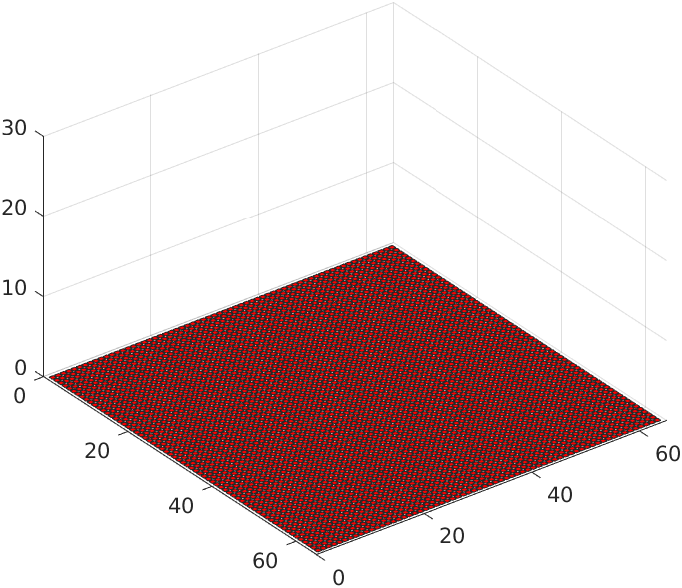}
\endminipage
\hfill
\minipage{0.32\linewidth}
 \includegraphics[width=\linewidth]{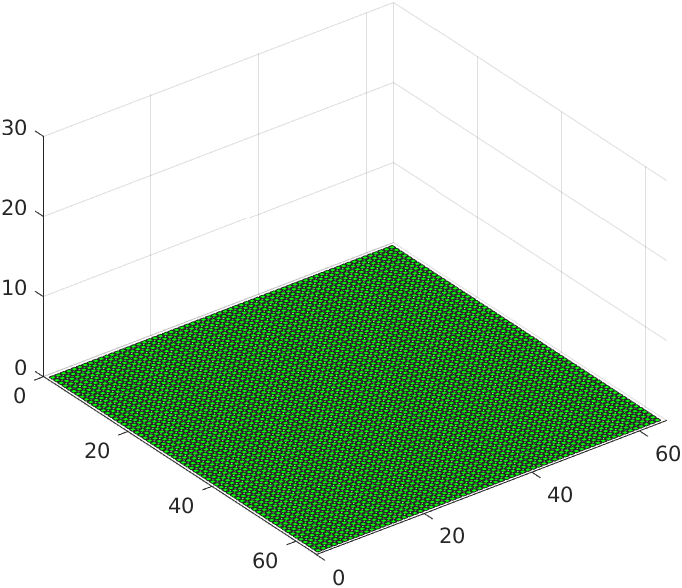}
\endminipage
\hfill
\minipage{0.32\linewidth}
 \includegraphics[width=\linewidth]{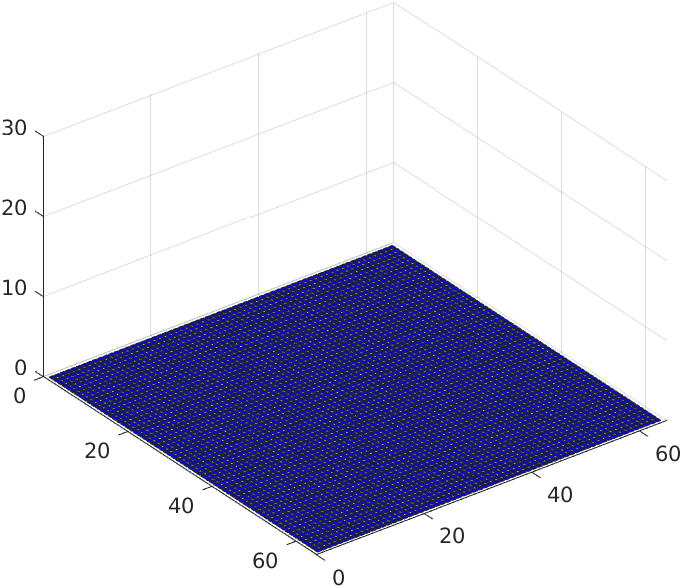}
\endminipage
\hfill
\\[0.6em]
\minipage{0.32\linewidth}
 \includegraphics[width=\linewidth]{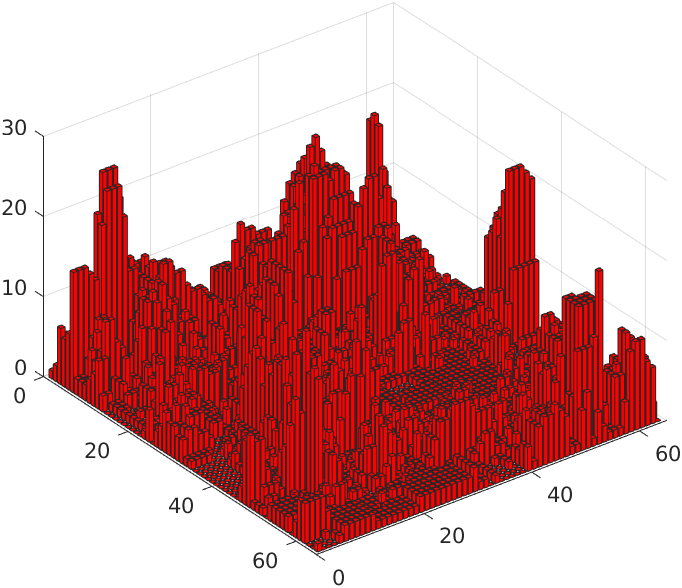}
\endminipage
\hfill
\minipage{0.32\linewidth}
 \includegraphics[width=\linewidth]{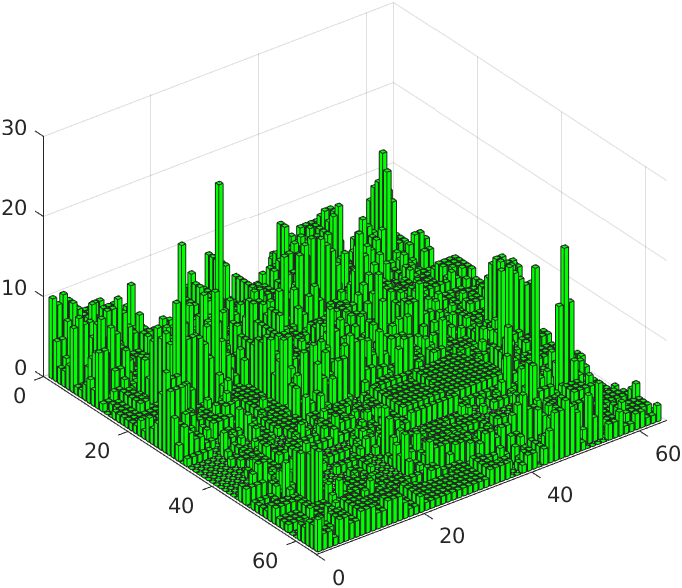}
\endminipage
\hfill
\minipage{0.32\linewidth}
 \includegraphics[width=\linewidth]{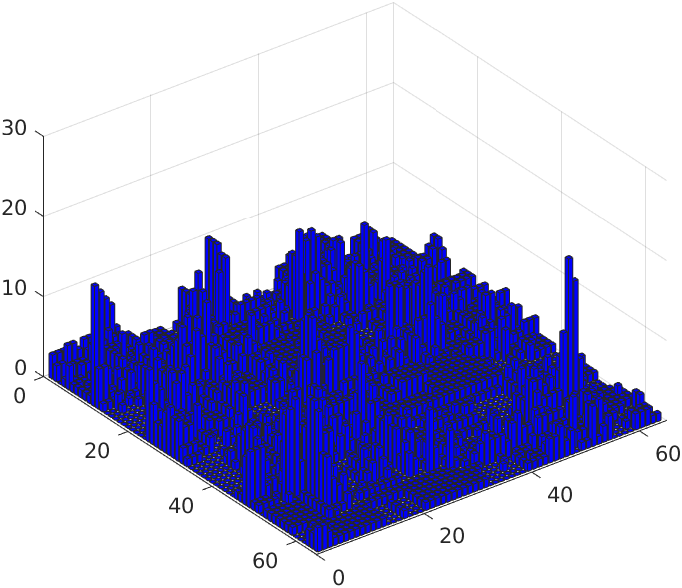}    
\endminipage
\caption{\label{transitivity_9-3} Comparison of differences in the channel-wise deviations between the exact filtering results of the dilation with a $9 \times 9$ SE and fourfold dilation with the $3 \times 3$ SE using our new method and the method of Burgeth and Kleefeld. 
{\bf Top:} {\bf From left to right:} Original image, dilation with the $9 \times 9$ SE by using our method and Burgeth's and Kleefeld's method.
{\bf Mid:} {\bf From left to right:} Differences in our method with absolute values in the channels red, green and blue.
{\bf Bottom:} {\bf From left to right:} Differences in Burgeth's and Kleefeld's method with absolute values in the channels red, green and blue.
}
\end{figure}

\noindent We see a continuation of the previous behaviour in the individual channels, which means that the transitivity of our method is preserved and that errors still occur everywhere with Burgeth's and Kleefeld's method, but particularly in the red colour channel. We see a further increase in the size of the errors, but it is not clear whether the error has actually doubled. 

To measure the error more accurately, we use the Frobenius norm, which we apply to the channel-by-channel difference of the images, and then add this up for the individual channels. In this way, we obtain the value 336.9095 for the first case of a twofold dilation and the value 686.5281 for the second case of a fourfold dilation, which is almost twice the error.

\section{Conclusion and Future Work}

In this work, we have used the colour morphology used by Burgeth and co-authors, see \cite{Loewner,BurgethKleefeld,morph_op_mat_im}, using Loewner order to introduce a new method for determining a ``maximum colour" in the sense of dilation. We applied the log-sum exponentiation to approximate the supremum, which was introduced by Maslov \cite{Maslov}, to the symmetric $2 \times 2$ matrices, which were created using a bijective mapping from the HC$\tilde{\textup L}$ bi-cone. By using the spectral decomposition of these matrices, we have managed to express the result of the approximation using these spectral decompositions, which means that it only depends on the input data. We conducted several experiments on synthetic and natural images to demonstrate Loewner order compliance, how different morphological operations affect our approach, and identified some key factors that influence the approximation. In particular, we have shown that our approach fulfils the property of transitivity, which is important for lattices and orders, in contrast to Burgeth's and Kleefeld's method \cite{BurgethKleefeld} or other methods derived from it, cf. \cite{WelkQuantile}.

In future work, we will take a closer look at the construction briefly introduced here and prove the properties presented. In this regard, we will particularly examine the minimality of the approach in more detail and provide a relaxation for the LES \eqref{LES} so that it even depends continuously on the input data.

\bibliographystyle{splncs04}
\bibliography{MyBibliography}

\end{document}